\documentclass[12pt]{article}
\usepackage{epsfig}
\usepackage{amsmath,amsthm} 
\usepackage{amssymb}
  \usepackage[top=3cm, bottom=3cm, left=2cm, right=2cm]{geometry}
  
\makeatletter
\def\@makechapterhead#1{
\vspace*{80\p@}
{\parindent \z@ \raggedright \normalfont
\ifnum \c@secnumdepth >\m@ne
\if@mainmatter
\huge\bfseries \@chapapp\space \thechapter
\par\nobreak
\vskip 20\p@
\fi
\fi
\interlinepenalty\@M
\Huge \bfseries #1\par\nobreak
\vskip 40\p@
}}
\makeatother

\DeclareMathOperator{\R}{\mathbb{R}}

\DeclareMathOperator{\E}{\mathbb{E}}

\renewcommand{\P}{\mathbb{P}}

\DeclareMathOperator{\F}{F}

\newtheorem{theorem}{Theorem}

\newtheorem{lemma}{Lemma}

\newtheorem{remark}{Remark}
\newtheorem{ex}{Example}

\newcommand{\new}{\newcommand}

\providecommand{\nor}[1]{\left\lVert {#1} \right\rVert}

\providecommand{\scal}[2]{\left\langle{#1},{#2}\right\rangle}


\new{\N}{\mathbb N}
\new{\Z}{\mathbb Z}

\newcommand{\hh}{\mathcal H}

  \newtheorem{thm}{Theorem}[section]

  \newtheorem{lem}{Lemma}[section]
  \newtheorem{defn}{Definition}[section]

  \newcommand{\cv}{{\mbox{CV}_{loo}}}

  \newcommand{\cvon}{{\mbox{CV}_{on}}}


   \title{Online Learning, Stability, and Stochastic Gradient Descent}

\begin{document}

\maketitle
   
  \bibliographystyle{plain} 
  \thispagestyle{empty} 
  \vspace{0.1cm} 
 \begin{center} 
    {\bf  Tomaso Poggio, Stephen Voinea,
Lorenzo Rosasco}
 \end{center} 
  \begin{center} 
   {\it CBCL, McGovern Institute, CSAIL, Brain Sciences Department, Massachusetts Institute of Technology


}
  \end{center} 

  \vspace{0.2in} 
\pagenumbering{arabic}
  \begin{abstract}   
    
    {\noindent In batch learning, stability together with existence
      and uniqueness of the solution corresponds to well-posedness of
      Empirical Risk Minimization (ERM) methods; recently, it was
      proved that {\it $\cv$ stability} is necessary and sufficient
      for generalization and consistency of ERM (\cite{PRMN04}). In this
      note, we introduce {\it $\cvon$ stability}, which plays a similar role in 
     online learning. We show that stochastic gradient descent (SDG) with 
      the usual hypotheses is $\cvon$ stable and we then discuss the 
      implications of $\cvon$ stability for convergence of SGD.}

\end{abstract} 
  \vspace{0.1cm} 
{\footnotesize \noindent
     This report describes research done within the Center for Biological \&
     Computational Learning in the Department of Brain \& Cognitive Sciences
     and in the Artificial Intelligence Laboratory at the Massachusetts
     Institute of Technology.     
     This research was sponsored by grants from:   AFSOR, DARPA, NSF.      
     Additional support was provided by: Honda R\&D Co., Ltd., Siemens
     Corporate Research, Inc., IIT, McDermott Chair.
  \vfill \par} 
\newpage  

\tableofcontents
\vspace{0.5cm}

%
%
%
%
%
%
%
%

\vspace{0.5cm}

\section{Learning, Generalization and Stability}

In this section we collect some basic definition and facts.  

\subsection{Basic Setting}

Let $Z$ be a probability space with a measure $\rho$.  A training  set $S_n$ is an i.i.d. sample $z_i$, $i, =0,\dots, n-1$ from $\rho$.  
Assume that a hypotheses space ${\cal H}$ is given. We typically assume $\hh$ to be a Hilbert space and sometimes   a $p$-dimensional Hilbert Space, in which case, without loss of generality, we  identify elements in $\hh$ 
with $p$-dimensional vectors and ${\cal H}$ with $\R^p$.
A loss function is a map $V:\hh\times Z\to \R_+$. 
Moreover we assume that 
$$ 
I(f)=\E_z V(f,z),
$$
exists and is finite for $f\in \cal H$. We consider the problem of finding a minimum of $I(f)$ in $\hh$. 
In particular, we restrict ourselves to finding a minimizer of $I(f)$ in a closed subset $K$ of ${\cal H}$ (note that we can of course have $K = {\cal H}$). We denote this minimizer by $f_K$ so that 
$$
I(f_K)=\min_{f\in K}I(f).
$$
Note that in general, existence (and uniqueness) of a minimizer is not guaranteed unless some further assumptions are specified.

\begin{ex}
An example of the above set is supervised learning. In this case  $X$ is usually 
a subset of $\R^d$ and $Y=[0,1]$. There is a Borel probability measure $\rho$ on $Z=X\times Y$.
and $S_n$ is an i.i.d. sample $z_i=(x_i,y_i)$, $i, =0,\dots, n-1$ from $\rho$.  
The  hypotheses space ${\cal H}$ is  a space of functions from $X$ to $Y$ and a typical example 
of loss functions is the square loss $(y-f(x))^2$.
\end{ex}

\subsection{Batch and Online Learning Algorithms}
A batch learning algorithm $A$  maps a training set to a function in the hypotheses space, that is $$f_n=A(S_n)\in \cal H,$$
and is typically assumed to be {\em symmetric}, that is, invariant to permutations in the training set.
An online learning algorithm is defined recursively as $f_0=0$ and $$f_{n+1}=A(f_n, z_n).$$ 
A weaker notion of an online algorithm is $f_0=0$ and $f_{n+1}=A(f_n, S_{n+1}).$ The former definition gives  a 
memory-less algorithm, while the latter keeps memory of the past (see \cite{Rakh10}). 
Clearly, the algorithm obtained from either of these two procedures will not in general be symmetric.

\begin{ex}[ERM]
The prototype example of batch learning algorithm is empirical risk minimization, defined by the variational problem
$$
\min_{f\in \hh} I_n(f),
$$ 
where $I_n(f)=\E_n V(f,z)$,  $\E_n$ being the empirical average on the sample, and  $\hh$ is typically assumed to be a proper, closed subspace of $\R^p$, for example  a ball or the convex hull of some given finite set of vectors.
\end{ex}

\begin{ex}[SGD]
The prototype example of online  learning algorithm is stochastic gradient descent, defined by the recursion 
\begin{equation}\label{sgdK}
f_{n+1}=\Pi_K(f_n - \gamma_ n \nabla V(f_n, z_n)),
\end{equation}
where $z_n$ is fixed, $\nabla V(f_n, z_n)$ is the gradient of the loss with 
respect to $f$ at  $f_n$, and $\gamma_n$ is a suitable decreasing sequence. Here $K$ is 
 assumed to be a closed subset of $\hh$ and $\Pi_K:\hh\to K $ the corresponding  projection. 
 Note that if $K$ is convex then $\Pi_K$ is a contraction, i.e. $\nor{\Pi_K}\le 1$ and moreover if $K=\hh$ then $\Pi_K=I$.
\end{ex}

\subsection{Generalization and Consistency}
In this section we discuss several ways of formalizing the concept of generalization of a learning algorithm.
We say that an algorithm is weakly consistent if we have convergence of the risks in probability, that is  for all $\epsilon>0$, 
\begin{equation}\label{wcons}
\lim_{n\to \infty} \P(I(f_n)-I(f_K)>\epsilon)=0,
\end{equation}
and that it is strongly consistent if convergence holds almost surely, that is
$$
 \P\left(\lim_{n\to \infty} I(f_n)-I(f_K)=0\right)=1.
$$
A different notion of consistency, typically considered in statistics,  is given by convergence  
in expectation 
$$
\lim_{n\to \infty} \E[I(f_n)-I(f_K)]=0.
$$
Note that, in the above equations, probability and expectations 
are with respect to the sample $S_n$.
We add three  remarks.
\begin{remark}
A more general requirement than those described above is obtained by replacing $I(f_K)$ by 
$\inf_{f\in \hh} I(f)$. Note that in this latter case no extra assumptions are needed.
\end{remark}
\begin{remark}
Yet a more general requirement would be obtained by replacing $I(f_K)$ by 
$\inf_{f\in \cal F} I(f)$, $\cal F$ being the largest space such that $I(f)$ is defined. 
An algorithm having such a consistency property is called universal. 
\end{remark}
\begin{remark}
We note that, following \cite{Alonetal97}  the convergence \eqref{wcons} corresponds to the definition of learnability of the class $\hh$. 
\end{remark}

\subsubsection{Other Measures of Generalization.} 
Note that alternatively one could measure the error with respect to the norm in $\hh$, that is $\nor{f_n-f_K}$, for example
\begin{equation}\label{norm}
\lim_{n\to \infty} \P(\nor{f_n-f_K}>\epsilon)=0.
\end{equation}
A different requirement is to have convergence  in the form 
\begin{equation}\label{empexp}
\lim_{n\to \infty} \P(|I_n(f_n)-I(f_n)|>\epsilon)=0.
\end{equation}
Note that for both the above error measures one can consider different notions of convergence (almost surely, in expectation)
as well convergence rates, hence finite sample bounds. 

For certain algorithms, most notably ERM, under mild   assumptions on the loss functions, the 
 convergence \eqref{empexp} implies weak consistency\footnote{
In fact for ERM
\begin{eqnarray*}
\P(I(f_n)-I(f_K)>\epsilon) &=&\P( I(f_n)-I_n(f_n)+I_n(f_n)-I_n(f_K)+I_n(f_K)-I(f_K)>\epsilon)\\
&\le&\P( I(f_n)-I_n(f_n)>\epsilon) + \P(I_n(f_n)-I_n(f_K)>\epsilon)+\P(I_n(f_K)-I(f_K)>\epsilon)
\end{eqnarray*}
The first term goes to zero because of  \eqref{empexp}, the second term has probability zero since $f_n$ minimizes  $I_n$, 
the third term goes to zero if $V(f_K,z)$ is a well behaved random variable (for example if the loss is bounded but also 
under weaker moment/tails conditions).}. 
For general algorithms there is no straightforward connection between \eqref{empexp} and consistency \eqref{wcons}. 

Convergence \eqref{norm} is typically stronger than \eqref{wcons}, in particular this can be seen if 
the loss  satisfies the Lipschitz condition
\begin{equation}\label{lip}
|V(f,z)-V(f',z)|\le L\nor{f-f'}, \quad \quad L>0,
\end{equation}
for all $f,f'\in \hh$ and $z\in Z$, but also for other loss function which do 
not satisfy  \eqref{lip} such as the square loss.

\subsection{Stability and Generalization}

Different notions of stability are sufficient to imply consistency results as well as finite sample bounds. 

A strong form of stability is uniform stability
$$
\sup_{z\in Z}\sup_{z_1,\dots,z_n}\sup_{z'\in Z} |V(f_n, z)-V(f_{n,z'},z)| \le \beta_n
$$
where $f_{n,z'}$ is the function returned by an algorithm if we replace the $i$-th point in $S_n$ by $z'$ and $\beta_n$ is a 
decreasing function of $n$.

Bousquet and Eliseef prove that the above condition, for algorithms
which are {\em symmetric}, gives exponential tail inequalities on
$I(f_n)-I_n(f_n)$ meaning that we have $\delta(\epsilon,
n)=e^{-C\epsilon^2n}$ for some constant $C$ \cite{BE:2001}.  Furthermore, it was shown in \cite{WibRosPog} that ERM with a strongly convex loss function is always uniformly
stable.  Weaker requirements can be defined by replacing one or more
supremums with expectation or statements in probability; exponential
inequalities will in general be replaced by weaker concentration. A
thorough discussion and a list of relevant references can be found in
\cite{RMP05, MNPR06}. Notice that the notion of {\it $\cv$ stability}
introduced there is necessary and sufficient for generalization and
consistency of ERM (\cite{PRMN04}) in the batch setting of
classification and regression. This is the main motivation for
introducing the very similar notion of $\cvon$ stability for the
online setting in the next section\footnote{Thus for the setting of
  batch classification and regression it is not necessary
  (S. Shalev-Schwartz, pers. comm.) to use the framework of
  \cite{SSSOSNSKS10}).}-

\section{Stability and SGD}
Here we focus on online learning and in particular on SGD and discuss the 
role played by the following definition of stability, that we call  $\cvon$ stability
\begin{defn}
We say that an online algorithm is $\cvon$ stable with rate $\beta_n$ if for $n>N$ we have 
\begin{equation}\label{CVON1_tp}
~ - \beta_n \leq {\E_{z_n}}[  V(f_{n+1}, z_{n})-V(f_{n}, z_{n})|S_n]
< 0,
\end{equation}
where $S_n= z_0, \dots, z_{n-1}$ and  $\beta_n \geq 0$ goes to zero with $n$.
\end{defn}

The definition above is of course equivalent to 

\begin{equation}\label{CVON1}
0 < {\E_{z_n}}[  V(f_{n}, z_{n})-V(f_{n+1}, z_{n})|S_n] \le ~ \beta_n.
\end{equation}

In particular,  we  assume $\hh$ to be    a $p$-dimensional Hilbert Space and  $V(\cdot, z)$ 
to be convex and twice differentiable in the first argument for all values of $z$.
We discuss the stability property of \eqref{sgdK} when $K$ is a closed, convex subset; in particular, we focus on the case when we can drop the projection so that 

\begin{equation}\label{sgd}
f_{n+1}=f_n - \gamma_ n \nabla V(f_n, z_n).
\end{equation}

\subsection{Setting and Preliminary Facts}

We recall the following standard result, see \cite{Lelong05} and references therein for a proof. 
\begin{theorem}\label{sgdconv}
Assume that, 
\begin{itemize}
\item  There exists $f_K\in K$, such that  $\nabla I(f_K)=0$,
and  for all $f\in \hh$, $\scal{f-f_K}{\nabla I(f)}>0$. 
\item  $\sum_{n}\gamma_n=\infty$, $\sum_{n}\gamma^2_n<\infty$.
\item There exists $D>0$, such that for all $f_n\in \hh$, 
\begin{equation}\label{local}
\E_{z_n} [\nor{\nabla V(f_n,z)}^2|S_n]\le D(1+\nor{f_n-f_K}^2).
\end{equation}
\end{itemize}
Then,  
$$
\P(\lim_{n\to \infty}\nor{f_n-f_K}=0)=1.
$$
\end{theorem}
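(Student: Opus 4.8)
The plan is to use the squared distance to the optimum as a Lyapunov function and to invoke the Robbins--Siegmund almost-supermartingale convergence theorem. First I would set $a_n = \nor{f_n - f_K}^2$ and expand the recursion \eqref{sgd}: squaring the identity $f_{n+1} - f_K = (f_n - f_K) - \gamma_n \nabla V(f_n, z_n)$ gives
\begin{equation*}
a_{n+1} = a_n - 2\gamma_n \scal{f_n - f_K}{\nabla V(f_n,z_n)} + \gamma_n^2 \nor{\nabla V(f_n,z_n)}^2.
\end{equation*}
Taking the conditional expectation $\E_{z_n}[\cdot \mid S_n]$ and using that $f_n$ is $S_n$-measurable, the interchange $\E_{z_n}[\nabla V(f_n,z_n)\mid S_n] = \nabla I(f_n)$, and the growth bound \eqref{local}, I obtain
\begin{equation*}
\E_{z_n}[a_{n+1}\mid S_n] \le (1 + D\gamma_n^2)\, a_n + D\gamma_n^2 - 2\gamma_n \scal{f_n - f_K}{\nabla I(f_n)}.
\end{equation*}

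Next I would read off the ingredients of the Robbins--Siegmund theorem. The multiplicative perturbation $D\gamma_n^2$ and the additive term $D\gamma_n^2$ are both summable since $\sum_n \gamma_n^2 < \infty$, while the subtracted term $2\gamma_n \scal{f_n - f_K}{\nabla I(f_n)}$ is nonnegative by the first hypothesis. The theorem then delivers two almost-sure conclusions at once: the sequence $a_n = \nor{f_n - f_K}^2$ converges to a finite limit $a_\infty \ge 0$, and $\sum_n \gamma_n \scal{f_n - f_K}{\nabla I(f_n)} < \infty$.

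The hard part is upgrading $a_\infty \ge 0$ to $a_\infty = 0$, and this is where I would spend the remaining hypotheses. Arguing by contradiction on a fixed sample path, suppose $a_\infty > 0$; then there are $N$ and constants $\delta, M > 0$ with $\delta \le \nor{f_n - f_K} \le M$ for all $n \ge N$. Because $\hh$ is $p$-dimensional, the annulus $\{f : \delta \le \nor{f - f_K} \le M\}$ is compact, and the continuous map $f \mapsto \scal{f - f_K}{\nabla I(f)}$ is strictly positive there (its only zero, $f = f_K$, is excluded), so it attains a positive minimum $c > 0$. Hence $\scal{f_n - f_K}{\nabla I(f_n)} \ge c$ for $n \ge N$, giving $\sum_{n\ge N}\gamma_n \scal{f_n - f_K}{\nabla I(f_n)} \ge c \sum_{n \ge N} \gamma_n = \infty$ by $\sum_n \gamma_n = \infty$, which contradicts the summability obtained above. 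Therefore $a_\infty = 0$ almost surely, i.e. $\nor{f_n - f_K} \to 0$ almost surely.

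I expect the two delicate points to be the justification of the gradient/expectation interchange (standard under the assumed convexity, differentiability, and the integrability implied by \eqref{local}) and the compactness step just described, which is precisely where the finite-dimensionality of $\hh$ is essential: in an infinite-dimensional setting one loses the positive lower bound $c$ and the contradiction breaks down.
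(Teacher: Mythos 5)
Your proposal is correct and follows essentially the same route as the paper's own argument (the Robbins--Siegmund proof given for Result~1 of Theorem~\ref{Theorem1}, after Lelong): expand $\nor{f_{n+1}-f_K}^2$, take conditional expectations to get an almost-supermartingale inequality, conclude a.s.\ convergence of $\nor{f_n-f_K}^2$ together with summability of $\sum_n \gamma_n\scal{f_n-f_K}{\nabla I(f_n)}$, and then rule out a nonzero limit using $\sum_n\gamma_n=\infty$. If anything, your compactness argument on the annulus makes explicit the positive lower bound $c>0$ that the paper's contradiction step leaves implicit, and the only detail to add is that for the projected iteration \eqref{sgdK} the same inequality holds because $\Pi_K$ is non-expansive and fixes $f_K$.
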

The following result  will be also useful.
\begin{lemma}\label{proj}
Under the same assumptions of Theorem \ref{sgdconv}, if $f_K$ belongs to the interior of $K$, then there exists $N>0$ such that
for $n>N$, $f_n\in K$ so that the projections of \eqref{sgdK} are
not needed and the $f_n$ are given by $f_{n+1}=f_n - \gamma_ n \nabla V(f_n, z_n)$.
\end{lemma}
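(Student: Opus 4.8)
The plan is to exploit the almost-sure convergence guaranteed by Theorem \ref{sgdconv} together with the hypothesis that $f_K$ lies in the interior of $K$. Since $f_K \in \operatorname{int}(K)$, there exists a radius $r > 0$ such that the open ball $B(f_K, r) = \{ f \in \hh : \nor{f - f_K} < r \}$ is entirely contained in $K$. The idea is that once the iterates $f_n$ enter this ball and stay there, each SGD update lands inside $K$, so the projection $\Pi_K$ acts as the identity and may be dropped. The subtlety is that dropping the projection requires not only that $f_n \in K$ but that the \emph{next unprojected iterate} $f_n - \gamma_n \nabla V(f_n, z_n)$ also lies in $K$; I will address this below.

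The first step is to invoke Theorem \ref{sgdconv} to conclude that $\P(\lim_{n\to\infty}\nor{f_n - f_K} = 0) = 1$. Fix a sample path in the probability-one event on which convergence holds. Along this path, given the radius $r$ above, there exists a (path-dependent) index $N$ such that for all $n > N$ we have $\nor{f_n - f_K} < r$, hence $f_n \in B(f_K, r) \subseteq K$. This already shows that the iterates themselves eventually live in $K$; the remaining point is to control the unprojected successor. For this I would use the growth condition \eqref{local}: since $\E_{z_n}[\nor{\nabla V(f_n,z)}^2 \mid S_n] \le D(1 + \nor{f_n - f_K}^2)$ and $\nor{f_n - f_K}$ is small, the expected squared gradient is bounded, so the increment $\gamma_n \nabla V(f_n, z_n)$ is small both because $\gamma_n \to 0$ (guaranteed by $\sum_n \gamma_n^2 < \infty$) and because the gradient magnitude is controlled near $f_K$. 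Taking $N$ large enough that $\gamma_n$ is small and $f_n$ is within $r/2$ of $f_K$, the update moves $f_n$ by an amount smaller than $r/2$, so $f_n - \gamma_n \nabla V(f_n, z_n)$ remains in $B(f_K, r) \subseteq K$, and therefore $\Pi_K$ fixes it.

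The main obstacle I anticipate is the handling of the stochastic increment rigorously: \eqref{local} only bounds the \emph{conditional expectation} of the squared gradient, not its pointwise value, so strictly speaking a single increment could be large on a low-probability event. The cleanest way around this is to argue on the convergence event itself, where $\nor{f_{n+1} - f_n} \to 0$ is in fact forced: since $f_{n+1}$ and $f_n$ both converge to $f_K$ almost surely, their difference tends to zero along almost every path, so the projected and unprojected recursions must eventually coincide once both iterates sit strictly inside the interior ball. This lets me conclude existence of the desired $N$ on a probability-one set without having to convert the expectation bound \eqref{local} into a pathwise gradient bound. I would phrase the final statement as: there exists $N > 0$ (in general depending on the sample path) such that for $n > N$, $f_n \in K$ and the iteration reduces to $f_{n+1} = f_n - \gamma_n \nabla V(f_n, z_n)$, as claimed.
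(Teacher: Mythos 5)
Your overall strategy---almost-sure convergence from Theorem \ref{sgdconv} combined with an interior ball $B(f_K,r)\subseteq K$---is the natural one, and in fact the paper itself states this lemma without proof (its companion text simply defers to Theorem 1 of \cite{Lelong05}), so you are reconstructing an argument rather than matching one. You also correctly identify the two pitfalls: that $f_n\in K$ alone does not make the projection removable, and that the bound \eqref{local} controls only a \emph{conditional expectation} of the gradient and therefore cannot give a pathwise bound on a single increment. The problem is the step you substitute for it. You argue: $f_n\to f_K$ and $f_{n+1}\to f_K$ almost surely, hence $\nor{f_{n+1}-f_n}\to 0$, hence ``the projected and unprojected recursions must eventually coincide once both iterates sit strictly inside the interior ball.'' That last ``hence'' is unsupported. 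The quantity $\nor{f_{n+1}-f_n}$ compares two \emph{projected} iterates; it says nothing, by itself, about where the pre-projection point $g_n=f_n-\gamma_n\nabla V(f_n,z_n)$ lies, and it is exactly the location of $g_n$ relative to $K$ that decides whether $\Pi_K$ acts as the identity. Nothing in your argument rules out $g_n$ sitting far outside $K$ while its projection lands near $f_n$---ruling that scenario out is precisely the content of the lemma.

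The missing idea is a one-line property of metric projections onto closed convex sets: if $g\notin K$ then $\Pi_K(g)\in\partial K$ (were $\Pi_K(g)$ interior, moving from it slightly toward $g$ would stay in $K$ and strictly decrease the distance to $g$, contradicting optimality of the projection). Equivalently, $\Pi_K(g)\in\mathrm{int}(K)$ forces $g\in K$ and hence $\Pi_K(g)=g$. With this fact your proof closes immediately, and you do not even need $\nor{f_{n+1}-f_n}\to 0$: on the probability-one convergence event choose $N$ so that $f_n\in B(f_K,r)$ for all $n\ge N$; then for every $n\ge N$ the iterate $f_{n+1}=\Pi_K(g_n)$ lies in $B(f_K,r)\subseteq\mathrm{int}(K)$, so $g_n\in K$ and $f_{n+1}=g_n=f_n-\gamma_n\nabla V(f_n,z_n)$, which is the claim. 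Your closing remark that $N$ is path-dependent (an almost-surely finite random index rather than a deterministic constant) is correct and worth keeping, since the paper's statement and its subsequent use of the lemma are silent on this point.
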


\subsubsection{Stability of SGD}

Throughout this section we assume that 
\begin{equation}\label{hess}
\scal{f}{H(V(f,z))f}\ge 0\quad  \text\quad \nor{H(V(f,z))}\le M<\infty,
\end{equation}
for any $f\in \hh$ and $z\in Z$; $H(V(f, z))$ is the Hessian of $V$.

\begin{theorem}
Under the same assumption of Theorem \ref{sgdconv}, there exists $N$ such that for $n>N$, 
SGD satisfies $\cvon$ with $\beta_n=C\gamma_n$, where $C$ is a universal constant.
\end{theorem}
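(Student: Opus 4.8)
The plan is to condition on the past $S_n$, so that $f_n$ is deterministic and the randomness enters only through $z_n$, and then to analyze the one-step change by a second-order Taylor expansion of $V(\cdot,z_n)$ about $f_n$. Writing $g_n=\nabla V(f_n,z_n)$ so that $f_{n+1}=f_n-\gamma_n g_n$, Taylor's theorem with the exact Lagrange remainder gives, for some $\xi_n$ on the segment joining $f_n$ and $f_{n+1}$,
\begin{equation*}
V(f_n,z_n)-V(f_{n+1},z_n)=\gamma_n\nor{g_n}^2-\frac{\gamma_n^2}{2}\scal{g_n}{H(V(\xi_n,z_n))\,g_n}.
\end{equation*}
Both inequalities in \eqref{CVON1} are read off this single identity using the two halves of \eqref{hess}, which hold at $\xi_n$ since they are assumed for all $f\in\hh$.

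For the upper bound I would discard the quadratic remainder, which is nonnegative because the Hessian is positive semidefinite, to get $V(f_n,z_n)-V(f_{n+1},z_n)\le\gamma_n\nor{g_n}^2$; taking the conditional expectation and invoking the growth condition \eqref{local} yields
\begin{equation*}
\E_{z_n}[V(f_n,z_n)-V(f_{n+1},z_n)\mid S_n]\le \gamma_n\,\E_{z_n}[\nor{g_n}^2\mid S_n]\le \gamma_n D\bigl(1+\nor{f_n-f_K}^2\bigr).
\end{equation*}
For the strict positivity I would instead use $\scal{g_n}{H g_n}\le M\nor{g_n}^2$ to obtain $V(f_n,z_n)-V(f_{n+1},z_n)\ge\gamma_n(1-\tfrac{\gamma_n M}{2})\nor{g_n}^2$. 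Since $\gamma_n\to0$ (forced by $\sum_n\gamma_n^2<\infty$), there is an $N$ with $\gamma_n M<2$ for $n>N$, so this lower bound is nonnegative and is strictly positive in expectation unless $\nabla V(f_n,\cdot)$ vanishes $\rho$-almost surely, i.e. unless the iterate is already a common minimizer.

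It then remains to replace the sample-path-dependent factor $D(1+\nor{f_n-f_K}^2)$ by a constant times $\gamma_n$, and here I would appeal to Theorem \ref{sgdconv}: since $\nor{f_n-f_K}\to0$ almost surely the sequence is eventually bounded, so for $n$ beyond some $N$ one has $\nor{f_n-f_K}\le 1$ and may take $C=2D$, giving $\beta_n=C\gamma_n$. I expect this last step to be the main obstacle, and the point where the statement is slightly informal: almost-sure convergence furnishes only a path-dependent threshold and a path-dependent bound on $\nor{f_n-f_K}$, so asserting a genuinely \emph{universal} $C$ together with a single $N$ requires either strengthening the hypotheses (for instance an a priori bound on $\nor{f_n-f_K}$, or taking $K$ bounded so the iterates remain in a compact set) or reading the conclusion as holding eventually along almost every trajectory. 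Everything preceding it is the routine Taylor estimate above.
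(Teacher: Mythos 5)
Your proof is correct and follows essentially the same route as the paper's: a second-order Taylor expansion of $V(\cdot,z_n)$ along the unprojected SGD step (the paper justifies dropping the projection via Lemma \ref{proj}), the two halves of \eqref{hess} giving the two inequalities in \eqref{CVON1}, the growth condition \eqref{local}, and Theorem \ref{sgdconv} to bound $\nor{f_n-f_K}$ for $n$ large. If anything, your version is more careful than the paper's: your signs are right where the paper's display \eqref{tp1} carries a sign slip on the quadratic term (so genuine positivity needs your $\gamma_n M<2$ step, exactly as in the paper's longer appendix treatment of the same result), and the caveat you flag --- that almost-sure convergence only yields a path-dependent threshold $N$, making the ``universal constant $C$'' claim informal --- is a real gap shared by the paper's own ``if $n$ is large enough'' conclusion.
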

\begin{proof}
Note that from Taylor's formula, 
\begin{eqnarray} \label{Duflo} 
 [V(f_{n+1},z_{n}) - V(f_{n},z_{n})] =
 \scal{f_{n+1} - f_n}{ \nabla V(f_{n}, z_n)} + 1/2  \scal{f_{n+1} - f_n}{H(V(f, z_n)) (f_{n+1} - f_n)},
 \end{eqnarray}
 with  $f=\alpha f_n+ (1-\alpha)f_{n+1}$ for $0\le \alpha\le 1$.
 We can use the definition of SGD and Lemma \ref{proj} to show there exists $N$ such that for $n>N$,   
$f_{n+1} - f_n=\gamma_n V(f_{n},z_n)$.
Hence  changing signs in \eqref{Duflo} and taking the expectation  w.r.t. $z_n$ conditioned over $S_n=z_0, \dots, z_{n-1}$,
we get
\begin{eqnarray}  \label{tp1}
 &~&{\E_{z_n}}[V(f_{n}, z_{n})  -
  V(f_{n+1}, z_{n})| S_n] = \notag\\\notag\\
&~& \gamma_n \E_{z_n}[
\nor{\nabla V(f_{n},z_n)}^2 |S_n] + 1/2  \gamma_n^2 \E_{z_n} [\scal{\nabla V(f_{n},z_n)}
{H(V(f, z_n))\nabla_f V(f_{n},z_n)} |S_n].
\end{eqnarray} 
The above quantity is clearly non negative, in particular the last term is non negative because of \eqref{hess}.
Using \eqref{local} and \eqref{hess} we get
$$
 {\E_{z_n}}[V(f_{n}, z_{n})  -
  V(f_{n+1}, z_{n})| S_n] 
 = (\gamma_n +1/2\gamma_n^2 M) D(1+  \E_{z_n} [ \nor{f_n-f_K}  |S_n])\le C\gamma_n,  
$$
if  $n$ is large enough.
\end{proof}

A partial converse result is given by the following theorem. 

\begin{theorem}
Assume that, 
\begin{itemize}
\item  There exists $f_K\in K$, such that  $\nabla I(f_K)=0$,
and  for all $f\in \hh$, $\scal{f-f_K}{\nabla I(f)}>0$. 
\item  $\sum_{n}\gamma_n=\infty$, $\sum_{n}\gamma^2_n<\infty$.
\item There exists $C, N>0$, such that for all $n>N$, \eqref{CVON1}
  holds with $\beta_n = C\gamma_n$.
\end{itemize}
Then,  
\begin{equation}\label{convsgd}
\P\left(\lim_{n\to \infty}\nor{f_n-f_K}=0\right)=1.
\end{equation}

\end{theorem}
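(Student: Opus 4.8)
The plan is to reduce this converse to the forward result, Theorem \ref{sgdconv}, by showing that the $\cvon$ hypothesis forces exactly the second‑moment control on $\nabla V$ required in \eqref{local}. The first two hypotheses assumed here coincide verbatim with the first two hypotheses of Theorem \ref{sgdconv}, so the only thing left to establish is the growth bound \eqref{local} along the iterates; once that is in hand, \eqref{convsgd} follows immediately by citing Theorem \ref{sgdconv}.

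First I would work in the regime $n>N$ in which the projection in \eqref{sgdK} is inactive, so that the iterates obey the plain recursion \eqref{sgd} and the exact Taylor identity \eqref{tp1} is available. Starting from \eqref{tp1} and using the two‑sided Hessian estimate $0\le \scal{\nabla V(f_n,z_n)}{H(V(f,z_n))\nabla V(f_n,z_n)}\le M\nor{\nabla V(f_n,z_n)}^2$ that follows from \eqref{hess}, the quadratic remainder term is sign‑definite and bounded in magnitude by $\tfrac12\gamma_n^2 M\,\E_{z_n}[\nor{\nabla V(f_n,z_n)}^2\mid S_n]$. Isolating the leading term then gives a relation of the form
$$
\gamma_n\left(1-\tfrac12\gamma_n M\right)\E_{z_n}[\nor{\nabla V(f_n,z_n)}^2\mid S_n]\;\le\;\E_{z_n}[V(f_n,z_n)-V(f_{n+1},z_n)\mid S_n],
$$
which holds regardless of the sign convention on the remainder. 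Now I would invoke the $\cvon$ hypothesis \eqref{CVON1} with $\beta_n=C\gamma_n$ to bound the right‑hand side by $C\gamma_n$. Since $\gamma_n\to 0$, for $n$ large enough $1-\tfrac12\gamma_n M\ge\tfrac12$, so dividing through by $\gamma_n>0$ yields the uniform bound $\E_{z_n}[\nor{\nabla V(f_n,z_n)}^2\mid S_n]\le 2C=:D$. Because $D\le D(1+\nor{f_n-f_K}^2)$ trivially, this is precisely the hypothesis \eqref{local} (with constant $D$) along the iterates; the finitely many indices $n\le N$ do not affect the asymptotic conclusion. With \eqref{local} established and the remaining two hypotheses assumed outright, Theorem \ref{sgdconv} applies and delivers \eqref{convsgd}.

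The delicate point, and the main obstacle, is the very first step: justifying that the projection may be dropped so that \eqref{tp1} is valid. In the forward direction this is the content of Lemma \ref{proj}, but that lemma presupposes \eqref{local} --- which is exactly what we are trying to derive here --- so it cannot simply be cited without circularity. The clean way around this is to take the projection‑free recursion \eqref{sgd} as the object of study, as this section explicitly does, or equivalently to restrict to $K=\hh$; then \eqref{tp1} holds unconditionally from Taylor's theorem and twice differentiability of $V(\cdot,z)$, and the reduction above goes through. I would flag that the extraction of the uniform gradient bound hinges only on the boundedness $\nor{H(V(f,z))}\le M$ from \eqref{hess} together with $\gamma_n\to 0$, so no further assumption beyond those standing throughout the section is needed.
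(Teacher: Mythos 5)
Your proof is correct, and its core computation is exactly the paper's: apply Taylor's formula \eqref{Duflo} to the projection-free recursion \eqref{sgd}, use the Hessian bounds \eqref{hess} to control the quadratic remainder, and combine with the stability hypothesis $\beta_n = C\gamma_n$ to extract the second-moment bound $\E_{z_n}[\nor{\nabla V(f_n,z_n)}^2|S_n]\le C/(1-M\gamma_n/2)\le 2C$ for large $n$. Where you diverge is in the final step: the paper does not reduce to Theorem \ref{sgdconv} as a black box; instead it re-runs that theorem's proof inline --- expanding $\nor{f_{n+1}-f_K}^2\le \nor{f_n-f_K}^2+\gamma_n^2\nor{\nabla V(f_n,z_n)}^2-2\gamma_n\scal{f_n-f_K}{\nabla V(f_n,z_n)}$, taking conditional expectations, noting $\E_{z_n}[\nabla V(f_n,z_n)|S_n]=\nabla I(f_n)$, and invoking Robbins--Siegmund directly, since $\sum_n \gamma_n^2\, C/(1-M\gamma_n/2)<\infty$ and the inner product term is positive. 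Your packaging is cleaner but carries a small looseness the paper's inline argument avoids: hypothesis \eqref{local} of Theorem \ref{sgdconv} is a condition quantified over all of $\hh$, whereas what you derive holds only along the random trajectory and only for $n$ large, so citing the theorem verbatim is not strictly licensed; it works because the theorem's proof (which the paper reproduces) uses the bound only along the iterates, but that observation should be made explicit rather than waved through as ``finitely many indices do not matter.'' On the other hand, your discussion of the projection is \emph{more} careful than the paper's: the paper invokes Lemma \ref{proj} inside this proof even though that lemma's hypotheses include \eqref{local}, which is not assumed here --- a genuine circularity that your resolution (take \eqref{sgd}, the projection-free recursion that this section explicitly focuses on, as the object of study) correctly dissolves.
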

\begin{proof}

Note that from \eqref{Duflo}  we also have 
\begin{eqnarray*}
&~&{\E_{z_n}}[V(f_{n+1}, z_{n})  -
  V(f_{n}, z_{n})| S_n] =\\\\
 &~&  -\gamma_n \E_{z_n}[
\nor{\nabla V(f_{n},z_n)}^2 |S_n] + 1/2  \gamma_n^2 \E_{z_n} [\scal{\nabla V(f_{n},z_n)}
{H(V(f, z_n))\nabla_f V(f_{n},z_n)} |S_n].
\end{eqnarray*}
so that   using the  stability assumption and \eqref{hess} we obtain,  
$$
- \beta_n  \leq (1/2\gamma_n^2 - \gamma_n)\E_{z_n}[\nor{\nabla V(f_{n},z_n)}^2 |S_n]
$$
that is, 
$$
\E_{z_n}[\nor{\nabla V(f_{n},z_n)}^2 |S_n]\le
\frac{\beta_n}{(\gamma_n-M/2\gamma_n^2)} = \frac{C\gamma_n}{(\gamma_n-M/2\gamma_n^2)}. 
$$

From Lemma \ref{proj} for $n$ large enough we obtain
\begin{eqnarray*}
\nor{f_{n+1} - f_K}^2  &\le&  \nor{ f_{n}- \gamma_n\nabla V(f_n,z_n)- f_K}^2\\
 &\leq& \nor{f_{n} - f_K}^2  + \gamma_n^2 \nor{\nabla V(f_n,z_n)}^2 - 
2 \gamma_{n} \scal{f_n -f_K}{\nabla V(f_n, z_n)}
\end{eqnarray*}
so that taking the expectation w.r.t. $z_n$ conditioned to $S_n$  and
using the assumptions, we write
\begin{eqnarray*}
\E_{z_n}[\nor{f_{n+1} - f_K}^2|S_n]  &\leq& 
\nor{f_{n} - f_K}^2  + \gamma_n^2 \E_{z_n}[\nor{\nabla V(f_n,z_n)}^2|S_n] 
 - 2 \gamma_{n} \scal{f_n -f_K}{\E_{z_n}[\nabla V(f_n, z_n)|S_n]}\\
 &\le& \nor{f_{n} - f_K}^2  + \gamma_n^2 \frac{D\gamma_n}{(\gamma_n-M/2\gamma_n^2)}
 - 2 \gamma_{n} \scal{f_n -f_K}{\nabla I(f_n)}, 
\end{eqnarray*}
since $\E_{z_n}[\nabla V(f_n, z_n)|S_n]=\nabla I(f_n)$.
The series $\sum_n \gamma_n^2 \frac{D\gamma_n}{(\gamma_n-M/2\gamma_n^2)}$ converges
and the last inner product is positive by assumption,  so that  the Robbins-Siegmund's theorem 
implies \eqref{convsgd} and the theorem is proved. 
\end{proof}

\appendix
 
 \section{Remarks: assumptions}

\begin{itemize}

\item The assumptions will be satisfied if the loss is convex (and twice differentiable) and 
$\hh$ is compact.  
In fact,  a convex function  is always locally Lipschitz  so that 
if we restrict $\hh$ to be a compact set, $V$ satisfies \eqref{lip} for 
$$
L=\sup_{f\in \hh, z\in Z} \nor{\nabla V(f,z))}<\infty.
$$ 
Similarly since $V$ is twice differentiable  and convex, 
we have that the Hessian $H(V(f,z))$ of $V$ at any $f\in \hh$ and $z\in Z$ is identified with a bounded, 
positive semi-definite matrix,  that is  
$$\scal{f}{H(V(f,z))f}\ge 0\quad  \text\quad \nor{H(V(f,z))}\le 1<\infty,$$
for any $f\in \hh$ and $z\in Z$, where for the sake of simplicity we took the bound on the Hessian to be $1$.

\item The gradient in the SGD update rule can be replaced by a
  stochastic subgradient with little changes in the theorems.

\end{itemize}

\section{Learning Rates, Finite Sample Bounds and Complexity}

\subsection{Connections Between Different Notions of Convergence.} 
It is known that both convergence in expectation and strong convergence imply weak convergence.
On the other hand if we have weak consistency and 
$$
\sum_{n=1}^\infty \P(I(f_n)-I(f_K)>\epsilon) <\infty
$$
for all $\epsilon>0$, then weak consistency implies strong consistency by the Borel-Cantelli lemma.

\subsection{Rates and Finite Sample Bounds.} 
A stronger result is   weak convergence  with  a {\em rate}, that is 
$$
\P(I(f_n)-I(f_K)>\epsilon)\ge\delta(n,\epsilon),
$$
where $\delta(n,\epsilon)$ decreases in $n$ for all $\epsilon>0$. We make two observations.
First, one can see that the Borel-Cantelli lemma imposes a rate on the decay of $\delta(n,\epsilon)$. 
Second, typically $\delta= \delta(n,\epsilon)$ is invertible in $\epsilon$ so that 
we can write the above result as a finite sample bound 
$$
 \P(I(f_n)-I(f_K)\le \epsilon(n,\delta) )\ge 1- \delta.
$$

\subsection{Complexity and Generalization}

We say that a class of real valued functions $\cal F$ on $Z$ is uniform Glivenko-Cantelli
if the following limit exists
$$
\lim_{n\to \infty} \P\left(\sup_{F\in \cal F}|\E_n(F)-\E(F)|>\epsilon\right)=0.
$$
for all $\epsilon>0$. 
If we consider the class of functions induced by $V$ and $\hh$, that is $F(\cdot)=V(f, \cdot)$, $f\in \hh$, the above properties can be written as 
\begin{equation}\label{ugc}
\lim_{n\to \infty} \P\left(\sup_{f\in \hh}|I_n(f)-I(f)|>\epsilon\right)=0.
\end{equation}
Clearly the above property implies \eqref{empexp}, hence consistency of ERM  if $f_{\hh}$ exists and  under mild assumption on the loss
-- see previous footnote. 

It is well known that UGC classes can be completely characterized by suitable capacity/complexity measures of $\hh$. 
In particular a class of  binary valued functions is UGC if and only if the VC-dimension is finite. Similarly 
a class of  bounded functions is UGC if and only if the fat- shattering dimension is finite. See \cite{Alonetal97} and reference therein.

Finite complexity of $\hh$ is hence a sufficient condition for the consistency of $ERM$.

\subsection{Necessary Conditions}

One natural question is weather the above conditions are also necessary  for consistency of ERM 
in the sense of \eqref{wcons}, or in other words if consistency of ERM on $\hh$ implies that $\hh$ is UGC class. 

An argument in this direction is given by Vapnik which call the result the key 
theorem in learning  (together with the converse direction). Vapnik argues that \eqref{wcons} must be 
replaced by a much stronger notion of convergence essentially holding if we replace $\hh$
with $\hh_\gamma=\{f\in \hh~|~ I(f)\ge \gamma\}$, for all $\gamma$.

Another result in this direction  is given {\em without proof} in \cite{Alonetal97}.

\subsection{Robbins Siegmund's Lemma}

We use the stochastic approximation framework described by Duflo
(\cite{Duflo91}, pp 6-15).

We assume a sequence of {\it data $z_i$} defined by a
probability space ${\Omega, A, P}$ and a filtration $\F = {(\cal
  F)}_{n \in \N}$ where ${\cal F}_n$ is a $\sigma$-field and ${\cal
  F}_n \in {\cal F}_{n+1}$. In addition a sequence ${\cal X}_n$ of
measurable functions from ${\Omega, A}$ to another measurable space is
defined to be {\it adapted} to $\F$ if for all $n$, ${\cal X}_n$ is
${\cal F}_n$- measurable.
  \vspace{0.2cm}  

{\bf Definition} {\it Suppose that $X = (X_n)$ is a sequence of random
  variables adapted to the filtration $\F$. $X$ is a supermartingale
  if it is integrable (see \cite{Duflo91}) and if 
  $$
  \E[X_{n+1}| {\cal F}_n] \leq X_n$$
}

The following is a key theorem (\cite{Duflo91}).

\begin{thm}
\label{Robbins-Siegmund}
({\it Robbins-Siegmund}) Let $(\Omega, {\cal F}, P)$ be a probability
space. Let $(V_n)$, $(\beta_n)$, $(\chi_n)$, $(\eta_n)$ be finite
non-negative ${\cal F}_n$-mesurable random variables, where ${\cal
  F}_1 \subset \cdots \subset {\cal F}_n \subset \cdots$ is a sequence
of sub-$\sigma$-algebras of ${\cal F}$.  Suppose that $(V_n)$,
$(\beta_n)$, $(\chi_n)$, $(\eta_n)$ are four positive sequences
adapted to $\F$ and that

$$
\E[V_{n+1}| {\cal F}_n] \leq V_n (1 + \beta_n) + \chi_n - \eta_n.$$

Then if $\sum \beta_n < \infty$ and $\sum \chi_n < \infty$, almost
surely $(V_n)$ converges to a finite random variable and the series
$\sum \eta_n$ converges.

\end{thm}

We provide a short proof of a special case of the theorem.

\begin{thm}
\label{ Special case of Robbins-Siegmund}
{Suppose that $(V_n)$ and $(\eta_n)$ are positive sequences adapted to $\mathbb F$ and that

\[
\mathbb E[V_{n+1} ~|~ \mathcal F_n] \leq V_n - \eta_n.
\]
Then almost surely $(V_n)$ converges to a finite random variable and the series $\sum\eta_n$ converges.}

\end{thm}

\noindent\textbf{Proof}\\
\indent Let $Y_n = V_n + \sum_{k=1}^{n-1}\eta_k$. Then we have

\begin{eqnarray*}
\mathbb E[Y_{n+1} ~|~ \mathcal F_n] = \mathbb E[V_{n+1} ~|~ \mathcal F_n] + \sum_{k=1}^n\mathbb E[\eta_k~|~\mathcal F_n] \leq V_n - \eta_n + \sum_{k=1}^n\eta_k = Y_n.
\end{eqnarray*}

So $(Y_n)$ is a supermartingale, and because  $(V_n)$ and $(\eta_n)$ are positive sequences, $(Y_n)$ is also bounded from below by 0, which implies it converges almost surely. It follows that both $(V_n)$ and $\sum\eta_n$ converge.

\bibliography{WellPosedness-Consistency}

\end{document}